\newcommand{\ie}{\textit{i.e.}}
\newcommand{\eg}{\textit{e.g.}}
\newcommand{\cf}{\textit{cf.}}
\newcommand{\E}{\mathbb{E}}
\newcommand{\Pp}{\mathbb{P}}
\newcommand{\PW}[1]{{\color{black}#1}}
\newcommand{\KK}[1]{{\color{black}#1}}
\newcommand{\CR}[1]{{\color{black}#1}}
\newtheorem{theorem}{Theorem}
\newtheorem{lemma}{Lemma}
\newtheorem{remark}{Remark}
\newcommand{\sfunction}[1]{\textsf{\textsc{#1}}}
\algrenewcommand\algorithmicforall{\textbf{\PW{for each}}}
\providecommand{\keywords}[1]
{
  \small	
  \textbf{\textit{Keywords ---}} #1
}
\DeclareMathOperator*{\argmin}{argmin}
\title{Decentralized diffusion-based learning\\ under non-parametric limited prior knowledge}
\author{Pawe\l{} Wachel\\
	Department of Control Systems and Mechatronics\\
	Wroc\l{}aw University of Science and Technology\\
	Wroc\l{}aw, Poland \\
	\texttt{pawel.wachel@pwr.edu.pl} \\
	\And
	Krzysztof Kowalczyk\\
	Department of Control Systems and Mechatronics\\
	Wroc\l{}aw University of Science and Technology\\
	Wroc\l{}aw, Poland \\
	\texttt{krzysztof.kowalczyk@pwr.edu.pl} \\
	\And
Cristian R. Rojas\\
	School of Electrical Engineering and Computer Science\\
	KTH Royal Institute of Technology\\
	Stockholm, Sweden \\
	\texttt{crro@kth.se} 
}
\date{}
\begin{document}

\maketitle

\begin{abstract}
We study the problem of diffusion-based network learning of a nonlinear phenomenon, $m$, from local agents' measurements collected in a noisy environment. For a decentralized network and information spreading merely between directly neighboring nodes, we propose a non-parametric learning algorithm, that avoids raw data exchange and requires only mild \textit{a priori} knowledge about $m$. Non-asymptotic estimation error bounds are derived for the proposed method. Its potential applications are illustrated through simulation experiments.
\end{abstract}
\keywords{Decentralized learning, distributed estimation, non-parametric learning}

\section{Introduction}
The field of decentralized and distributed learning fits in with the area of modern Internet-of-Things (IoT)  and wireless sensor networks (WSN) applications. Due to technological advances and functional advantages related to robustness and scalability \cite{vieira2003survey}, decentralized and distributed techniques are becoming more widespread in industry and are the subject of ongoing scientific research. Among various goals specific for learning and inference in decentralized networks, like learning linear modules \cite{ramaswamy2021learning}, distributed economic dispatch \cite{guo2015distributed} or target tracking \cite{martinovic2022cooperative}, one can point out estimation tasks as in \cite{bertrand2011consensus} or \cite{cattivelli2009diffusion}. In this scenario, sensors (or agents) are scattered around a given area and collect data about an unknown phenomenon, \PW{modelled as a nonlinear function $m\colon \mathbb{R} \to \mathbb{R}$}. Due to potential communication restrictions and the lack of dedicated fusion centers, agents may rely only on their local/private measurements and available network information. Following \cite{modalavalasa2021review} and \cite{he2020distributed} we begin with brief summary of a few well-known strategies.

In the incremental approach \cite{lopes2007incremental,bogdanovic2014distributed}, local parameter estimates are transferred through the network in cycles, which leads to significant communication cost reduction. Nevertheless, determining network cycles (Hamiltonian paths) is known to be NP-Hard \cite{rabbat2005quantized}, which leads to high computational expenses whenever the network architecture changes, \textit{e.g.},~due to node failure or agent movement. However, the performance of such algorithms may be close to their centralized counterparts in the case of static, energy-efficient, and fail-safe networks~\cite{bogdanovic2014distributed}.

Consensus- and diffusion-based approaches provide more general solutions in which data samples are broadcast in the agents' local neighborhoods. However, some of these methods impose implementation restrictions in real-time problems; see \eg~discussion in \cite{tu2012diffusion}. In this context, improvements have been made \eg~in \cite{schizas2009distributed}; see also \cite{lopes2008diffusion, sayed2007distributed}. It is worth pointing out that, as shown in \cite{tu2012diffusion}, diffusion strategies perform better than consensus in the fields of network stability and information spreading. \KK{In the literature, we may also find a detailed discussion of well-known in-network learning obstacles, for instance, related to asynchronous learning~\cite{sayed2018asynchronous,zhang2014fusion}, flexible user-defined measurement and communication periods~\cite{jiang2012probability,yang2014stochastic}, or network topology changes~\cite{paradis2007survey}. }

Most of the above-mentioned methods require parametric \textit{a priori} information, that allows reformulating the considered problem as the estimation of a finite set of parameters.
The approach considered in this paper weakens these requirements and admits significantly limited a priori knowledge of a non-parametric form. In fact, we assume only that the latent nonlinear phenomenon of interest is a Lipschitz continuous function. 
Furthermore, we do not employ gradient descent routines and avoid the related network stability issues \cite{tu2012diffusion}. At the same time, the proposed algorithm tends to compensate for low-quality local measurements of agents by exploiting incoming information from the network.

The main contributions of the paper are as follows:
\begin{itemize}
    \item We propose a new decentralized learning algorithm for modeling latent nonlinear phenomena observed locally by the agents in a sensor network.
    \item The accuracy of the proposed algorithm is formally investigated and error bounds are derived for user-specified significance levels via concentration inequalities.
    \item Formal properties of the algorithm are investigated under weak assumptions imposed on explanatory data, measurement noises and the nonlinear phenomenon of interest.
    \item The theoretical analysis of the algorithm is related to its practical aspects, as the error bounds hold for any finite (non-asymptotic) number of observations.
    \item The algorithm and the corresponding error bounds are simple and easy to interpret. Thus, they can be implemented and evaluated in agents with limited computing resources. 
    \item The algorithm preserves data privacy in the sense that nodes do not share their local (private) output measurements; only raw pre-estimates are exchanged in the network. 
    \item The algorithm is robust against the variability of measurement noise levels among the network nodes. In particular, it tends to compensate for low-quality local observations with available online network information.
\end{itemize}
The paper is organized as follows. In Sections~\ref{Sec:PF} and~\ref{Sec:Alg} we formally state the considered network learning task, introduce \textit{model learning} and \textit{model exploiting} routines, and discuss formal properties of the method. The results of numerical experiments are described in Section~\ref{Sec:Num} and concluding remarks are collected in Section~\ref{Sec:Con}.  

\section{Problem formulation}\label{Sec:PF}
We study a decentralized network of $M$ agents inferring under limited prior knowledge about an unknown nonlinear phenomenon of interest. In the considered problem, we model the network as a connected and undirected graph $\mathcal{G}=(\mathcal{M},\mathcal{E})$ with $\mathcal{M}=\{1,2,\dots,M\}$ nodes and a set of unweighted edges $\mathcal{E}$. Two nodes $i,j\in\mathcal{M}$ can exchange information if and only if they are directly connected, \textit{i.e.}, if $\{i,j\}\in\mathcal{E}$. For any node $i\in\mathcal{M}$, its neighborhood is the set $\mathcal{N}_i=\{j\colon \{i,j\}\in\mathcal{E}\}$. 

We consider a network learning scenario in which at every time step $t\in \mathbb{N}$ each agent $k\in\mathcal{M}$ acquires explanatory data $\xi _{k,t}^{loc}\in \mathbb{R}$, and observes  outcomes of an underlying unknown nonlinear phenomenon $m\colon \mathcal{D} \subset \mathbb{R}\rightarrow\mathbb{R}$ as noisy measurements $y_{k,t}^{loc}$, where
\begin{equation} \label{eq:measurements}
y_{k,t}^{loc}=m( \xi _{k,t}^{loc}) +\eta_{k,t}^{loc},\quad k\in{\mathcal{M}},\quad t\in \mathbb{N},
\end{equation}
and $\eta_{k,t}^{loc}$ stands for a sensor noise/inaccuracy. The unknown map $m$ is the same for the entire network, although the measurements of individual agents may have only a \textit{local} character\footnote{For ease of presentation, in the paper we use superscripts `\textit{loc}', `\textit{acq}', `\textit{req}' and `\textit{shr}' as abbreviations of terms \textit{local}, \textit{acquired}, \KK{\textit{requested}} and \textit{shared}, respectively.}, \textit{i.e.} for any agent $k\in\mathcal{M}$ the corresponding data set $\{ ( \xi_{k,n}^{loc},y_{k,n}^{loc}) ,n=1,2,\dots,t\}$ allows to infer $m$ only within some local subset $\mathcal{D}_k$ of domain $\mathcal{D}$; $\xi_{k,n}\in\mathcal{D}_{k}$, \textit{cf.} \KK{Fig. \ref{fig:NetDiagram}}.

While the agents may have a limited view of the phenomenon under study, our goal is to design a decentralized network learning algorithm, enabling them to expand their local (private) inference capabilities into a broader range $\mathcal{D}_1\cup\mathcal{D}_2\cup\dots\cup\mathcal{D}_M$. However, since all the agents can communicate merely with their direct neighbors, the problem at hand is of an information-diffusion nature.   
\begin{figure}[ht]
    \centering
    \includegraphics[width=0.8\textwidth]{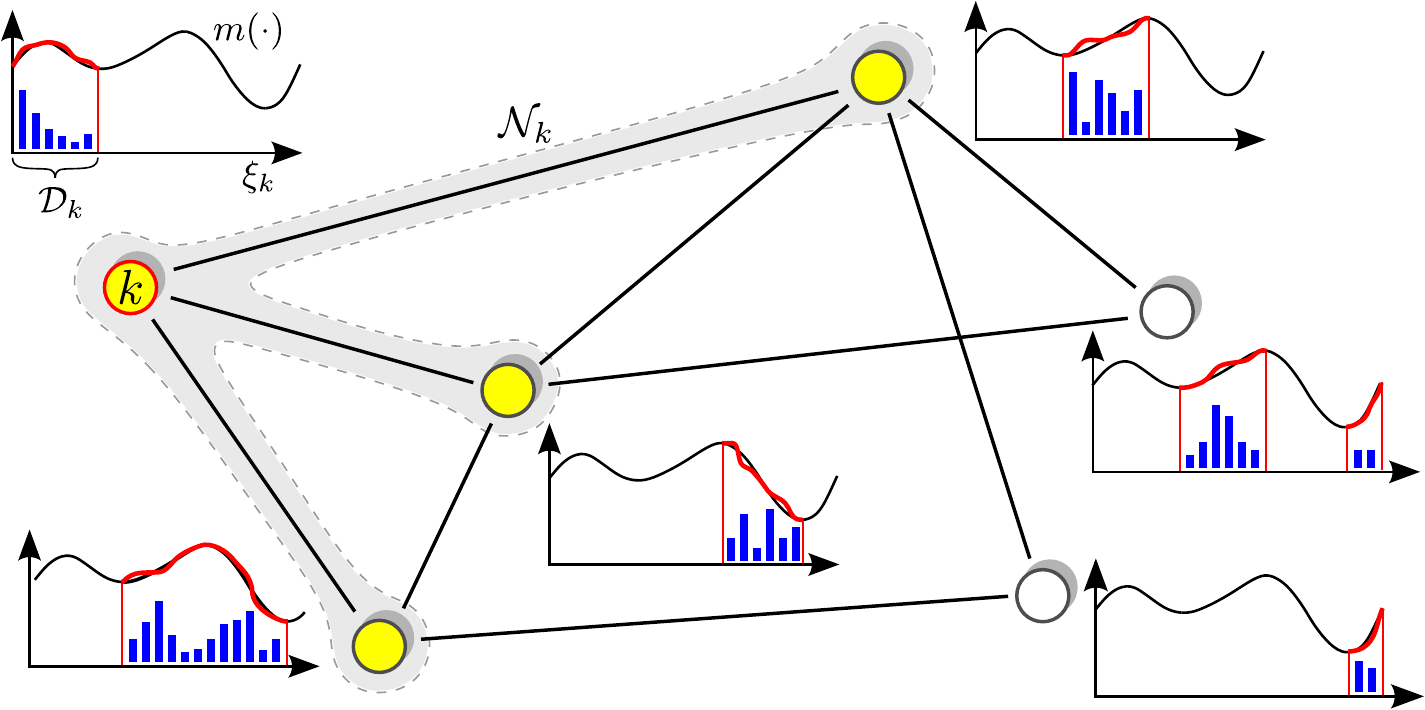}
    \caption{A decentralized network of agents with histograms of local explanatory data. The neighborhood $\mathcal{N}_k$ of agent $k$ has been highlighted.}
    \label{fig:NetDiagram}
\end{figure}

To ensure the generality of the proposed approach, we assume very mild \textit{a priori} knowledge about the latent phenomenon. In fact, it is only required that $m$ is Lipschitz continuous. Hence, the problem at hand is non-parametric, and parametric estimation techniques may lead to inconsistent results.

\textit{Assumption} $1$. The latent phenomenon of interest, $m\colon \mathcal{D} \subset \mathbb{R}\rightarrow \mathbb{R}$, is a Lipschitz continuous mapping, \textit{i.e.}
    \begin{equation}
        \lvert m(\xi)-m(\xi')\rvert\leq L\vert \xi-\xi'\vert,\quad \forall\;\xi,\xi'\in\mathcal{D},
    \end{equation}
    with a known constant $L<\infty$.
    
In general, the goal of every agent $k\in\mathcal{M}$ is to learn $m$ from its local and acquired (networked) information. We characterize this scenario in detail under weak assumptions about the agents' measurements and their corresponding uncertainties.
    
\textit{Assumption} $2$. For any agent $k\in\mathcal{M}$ in the network $\mathcal{G}$, the local explanatory sequence $\{\xi_{k,t}^{loc}\colon t\in\mathbb{N}\}$ is an arbitrary stochastic process $\xi_{k,t}^{loc}\in\mathcal{D}_{k}\subset\mathcal{D}$.

\textit{Assumption} $3$. For any agent $k\in\mathcal{M}$ in the network $\mathcal{G}$, the disturbance $\{\eta_{k,t}^{loc}\colon t\in\mathbb{N}\}$, is a sub-Gaussian stochastic process, 
\PW{in the sense that} there exists a $\sigma_k>0$ such that for every $\gamma\in\mathbb{R}$ and $t\in \mathbb{N}$,
\begin{equation}\label{eq:A_eta}
\E\{ \exp ( \gamma \eta _{k,t}^{loc})\PW{|\eta_{k,1}^{loc}, \dots, \eta_{k,t-1}^{loc}, \xi_{k,1}^{loc}, \dots, \xi_{k,t}^{loc}}\} \leq \exp
\left( \frac{\gamma ^{2}\sigma_k ^{2}}{2}\right).
\end{equation}
The above assumptions require some remarks. Observe first that the class of local explanatory sequences is very general and does not impose any limitations on the functional dependence between $\xi_{k,t}^{loc}$ and $\xi_{l,t}^{loc}$ for $k,l\in\mathcal{M}$. \PW{Regarding condition \eqref{eq:A_eta}, it can be proven it implies that $E\{\eta _{k,t}^{loc}|\eta _{k,1}^{loc},\dots,\eta _{k,t-1}^{loc}, \xi_{k,1}^{loc},\dots,\allowbreak \xi_{k,t}^{loc}\}=0$ and $E\{(\eta_{k,t}^{loc})^2|\eta _{k,1}^{loc}, \dots, \eta_{k,t-1}^{loc}, \xi_{k,1}^{loc}, \dots, \allowbreak \xi_{k,t}^{loc}\}\leq\sigma_k^2$, \cf~\cite[Exercise~2.5]{wainwright2019high}. Furthermore,} the variance proxies $\sigma_k$ can be different for different nodes of the network. \PW{The condition \eqref{eq:A_eta} is fulfilled in particular by an \textit{i.i.d.}~zero-mean Gaussian sequence, independent of the process $\{\xi_{k,t}^{loc}\}$.} \PW{Finally}, if $\eta _{k,t}^{loc}\in[a,b]$ a.s., for some $a,b\in\mathbb{R}$, then $\sigma_k^2$ \CR{can be taken as} $(b-a)^2/4$ by Hoeffding's Lemma \PW{(\CR{even though} any value of $\sigma_k^2$ larger than $(b-a)^2/4$ would \CR{also} satisfy \eqref{eq:A_eta} in this case).}

\section{Non-parametric decentralized learning}\label{Sec:Alg}
According to the problem statement, every agent in the network has the ability to collect its local (private) observations $\{ (\xi_{k,n}^{loc},y_{k,n}^{loc}) ,n=1,2,\dots,t\}$, allowing for merely a \textit{local} estimation of mapping $m$. In the sequel, we focus on the non-parametric, numerically efficient, Nadaraya--Watson estimator
\begin{equation}\label{eq:NW1}
  \begin{aligned}
\hat{\mu}_{k,t}(x):=&\sum_{n=1}^{t}\frac{K_h(x,\xi_{k,n}^{loc})}{\kappa_{k,t}(x)}y_{k,n}^{loc},\\
\quad\kappa _{k,t}(x):=&\kappa_{k,t}(x,h)=\sum_{n=1}^{t}K_h(x,\xi_{k,n}^{loc}),
 \end{aligned}
\end{equation}
where $k$ stands for the agent index, $K_h(x,\xi):=K((x-\xi)/h)$, and where $K$ with $h$ are the kernel function and bandwidth parameter, respectively.

In general, the pointwise consistency of $\hat{\mu}_{k,t}(x)$ can be proven in various probabilistic senses under mild requirements regarding $m(x)$, as long as the observations `tend' to concentrate in a neighborhood of $x$. Such results, however, often have an asymptotic flavor and cannot be readily adapted to finite data set tasks. Hence, for the considered decentralized non-parametric learning, we develop a non-asymptotic bound. To this end, consider the following assumption:

\textit{Assumption} $4$. The kernel $K\colon \mathbb{R}\rightarrow\mathbb{R}$ is a non-negative and bounded function such that $0 \leq K(v)\leq1$ for all $v\in\mathbb{R}$. Also, $K(v)=0$ for all $|v|>1$.

\begin{lemma}\label{L:local}
Let Assumptions 1--4 be in force.
Consider agent $k\in\mathcal{M}$ in the network $\mathcal{G}$, equipped with the estimator $\hat{\mu}_{k,t}$, and fix a bandwidth $h$. \CR{Let } $x \in \mathcal{D}_k$ \CR{be a fixed deterministic number, or in general a function of $\eta_{k,1}^{loc}, \dots, \eta_{k,t-1}^{loc}, \xi_{k,1}^{loc}, \dots, \xi_{k,t}^{loc}$} \PW{(\eg, $x=\xi_{k,t}^{loc}$)}. Then, for every $ 0<\delta <1$, with probability at least $1-\delta $, \CR{if $\kappa _{k,t}(x)\neq 0$},
\begin{align}
\vert \hat{\mu}_{k,t}(x) -m(x)
\vert \leq \beta_{k,t}(x),\\ \label{eq:main_bound}
\quad\text{ where}\quad
\beta_{k,t}(x) := L h+2\sigma_k \frac{\alpha _{k,t}(x ,\delta )}
{\kappa _{k,t}(x)}
\end{align}
and 
\begin{equation*}
\alpha _{k,t}( x ,\delta ) :=
\begin{cases} 
\sqrt{\log ( \sqrt{2}/\delta )}, & \textnormal{for } \kappa _{k,t}(x) \leq 1 \vspace{3pt}\\ 
\sqrt{\kappa _{k,t}(x) \log ( \sqrt{1+\kappa _{k,t}(x)}/\delta )}, & \textnormal{for } \kappa _{k,t}(x) > 1.
\end{cases}
\end{equation*}
\end{lemma}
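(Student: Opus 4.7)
The plan is to decompose $\hat\mu_{k,t}(x)-m(x)$ into a deterministic bias term and a stochastic noise term, handle the bias via Lipschitz continuity on the compact support of $K_h(x,\cdot)$, and handle the noise by a self-normalized sub-Gaussian concentration inequality of the method-of-mixtures type. Using the identity $\sum_{n=1}^{t} K_h(x,\xi_{k,n}^{loc})/\kappa_{k,t}(x)=1$, which is valid whenever $\kappa_{k,t}(x)\neq 0$, I would first write
\begin{equation*}
\hat\mu_{k,t}(x)-m(x)
= \sum_{n=1}^{t}\frac{K_h(x,\xi_{k,n}^{loc})}{\kappa_{k,t}(x)}\bigl[m(\xi_{k,n}^{loc})-m(x)\bigr]
+ \frac{1}{\kappa_{k,t}(x)}\sum_{n=1}^{t} K_h(x,\xi_{k,n}^{loc})\,\eta_{k,n}^{loc}.
\end{equation*}
For the bias (first sum), Assumption~4 ensures that any $n$ contributing a positive summand satisfies $|x-\xi_{k,n}^{loc}|\leq h$, and then Assumption~1 yields $|m(\xi_{k,n}^{loc})-m(x)|\leq Lh$ for such indices. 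Since the normalized weights are nonnegative and sum to one, the bias is deterministically bounded by $Lh$, giving the first summand of $\beta_{k,t}(x)$.

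For the noise term, set $W_t:=\sum_{n=1}^{t} K_h(x,\xi_{k,n}^{loc})\eta_{k,n}^{loc}$ and $V_t:=\sum_{n=1}^{t} K_h(x,\xi_{k,n}^{loc})^{2}$. I would prove, with probability at least $1-\delta$, the self-normalized bound
\begin{equation*}
W_t^{2}\;\leq\;2\sigma_k^{2}\,(1+V_t)\,\log\!\bigl(\sqrt{1+V_t}/\delta\bigr),
\end{equation*}
by constructing the supermartingale
\begin{equation*}
M_n(\lambda):=\exp\!\Bigl(\lambda\!\sum_{j\leq n}\! K_h(x,\xi_{k,j}^{loc})\eta_{k,j}^{loc} - \tfrac{\lambda^{2}\sigma_k^{2}}{2}\!\sum_{j\leq n}\! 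K_h(x,\xi_{k,j}^{loc})^{2}\Bigr),
\end{equation*}
integrating $M_t(\lambda)$ in $\lambda$ against a centered Gaussian prior of variance $\sigma_k^{-2}$, and applying Markov's inequality to the resulting mixture (method of mixtures). Assumption~4 gives $K_h\in[0,1]$, hence $V_t\leq\kappa_{k,t}(x)$; the two-branch form of $\alpha_{k,t}(x,\delta)$ then follows by substituting $1+V_t\leq 2\kappa_{k,t}(x)$ when $\kappa_{k,t}(x)>1$ and $1+V_t\leq 2$, $\sqrt{1+V_t}\leq\sqrt{2}$ when $\kappa_{k,t}(x)\leq 1$. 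Dividing by $\kappa_{k,t}(x)$ and combining with the bias bound via the triangle inequality produces $\beta_{k,t}(x)$.

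The main obstacle is the predictability of the weights $K_h(x,\xi_{k,n}^{loc})$: since $x$ is allowed to depend on the full trajectory $\{\eta_{k,n}^{loc}\}_{n<t}\cup\{\xi_{k,n}^{loc}\}_{n\leq t}$, these weights are not naturally adapted to the increasing filtration suggested by Assumption~3. I would resolve this by running the martingale argument in the enlarged filtration under which $\xi_{k,1}^{loc},\dots,\xi_{k,t}^{loc}$ are revealed at time zero and the noises are processed one at a time; then $x$ becomes measurable before any noise is observed, each weight is trivially predictable, and the conditional sub-Gaussian bound from Assumption~3 still supplies the required one-step inequality $\E[\exp(\lambda K_h(x,\xi_{k,n}^{loc})\eta_{k,n}^{loc})\mid\mathcal{F}_{n-1}]\leq\exp(\tfrac{\lambda^{2}\sigma_k^{2}}{2} K_h(x,\xi_{k,n}^{loc})^{2})$ that is needed to verify the supermartingale property.
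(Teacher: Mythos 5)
Your proposal matches the paper's proof essentially step for step: the same bias/noise decomposition with the bias bounded by $Lh$ via Assumptions 1 and 4, the same self-normalized inequality $W_t^2\le 2\sigma_k^2(1+V_t)\log(\sqrt{1+V_t}/\delta)$ obtained by the method of mixtures with a Gaussian prior and Markov's inequality (this is exactly Lemmas~\ref{Lem:Tech_A} and~\ref{Lem:Tech_B} in the appendix), and the same two-case reduction using $K_h\in[0,1]$, hence $V_t\le\kappa_{k,t}(x)$. Your closing remark on the predictability of the weights is a careful extra touch (the paper absorbs this into the conditioning in Lemma~\ref{Lem:Tech_A}), but it does not change the route.
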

\begin{proof}
    See the Appendix.
\end{proof}
The estimation error bound in the lemma above has a direct practical character since $\beta_{k,t}(x)$ can be evaluated for any finite number of measurements if $\sigma_k$ and $L$ are known or can be upper-bounded by known constants. \KK{Note that, unlike the common assumption used in Nadaraya-Watson estimation (in particular, symmetry or unit integral of the kernel), the kernel, $K$, has to satisfy only the mild requirements in Assumption $4$. However, since the confidence bound $\beta_{k,t}$ strongly depends on $\kappa _{k,t}$, we can choose the kernel $K$ and the bandwidth parameter $h$ to minimize $\beta_{k,t}$. Nevertheless, we leave these aspects for future research.  For simplicity, in the implementation presented in Section 4 we use the classical box kernel.}

Based on bound \eqref{eq:main_bound}, we can proceed to describe our proposed network learning algorithm, presented in Algorithm~\ref{Alg:1}. From now on we focus on an arbitrary but fixed agent $k\in\mathcal{M}$ in the network. Clearly, for a given local explanatory sample $\xi_{k,t}^{loc}$, one can form a corresponding \textit{local} tuple $T(\xi_{k,t}^{loc}):=(\xi_{k,t}^{loc},\hat{\mu}_{k,t}^{loc},\beta_{k,t}^{loc})$ with $\hat{\mu}_{k,t}^{loc}=\hat{\mu}_{k,t}(\xi_{k,t}^{loc})$ and $\beta_{k,t}^{loc}=\beta_{k,t}(\xi_{k,t}^{loc})$. We denote the set of all such tuples (collected by the considered agent \KK{$k$}) as $\mathbb{T}^{loc}_{\KK{k}}$. By analogy, let $T(\xi_{l,t}^{acq}):=(\xi_{l,t}^{acq},\hat{\mu}_{l,t}^{acq},\beta_{l,t}^{acq})$ be the tuple \textit{acquired} by agent $k$ from node $l\in\mathcal{N}_k$. The set of all such tuples (for agent $k$) is denoted as $\mathbb{T}^{acq}_{\KK{k}}$.
\KK{
\begin{remark}
Algorithms~1 and~2 present the proposed method for non-parametric decentralized learning. They are constructed based on the following commands:
\begin{itemize}
    \setlength\itemsep{0.1em}
    \item $\sfunction{Get}$ -- get the local explanatory sample and the corresponding noisy measurement of the phenomenon outcome, $(\xi_{k,t}^{loc},y_{k,t}^{loc})$, for $t=1,2,\dots$ 
    \item $\sfunction{Evaluate}$ -- evaluate the estimator value and/or its corresponding error bound according to eq. $(4)$ and/or Lemma $1$, respectively.
    \item $\sfunction{Append}$ -- append a new tuple to the set of \textit{local} or \textit{acquired} tuples. In general, the $\sfunction{Append}$ command may contain an update mechanism that \textit{eliminates} less accurate tuples from a given collection when more accurate data is available (a simple Lipschitz-based routine for this is applied in the implementation used in Section 4).  
    \item $\sfunction{IsAvailable}$ -- check whether tuple $T(\xi^{acq}_{l,t-1})$ or the requested sample, $\xi^{req}_{l,t-1}$, has been obtained from the neighbour $l\in\mathcal{N}_k.$
    \item $\sfunction{Send}$ -- send a given tuple to the requesting node $l\in\mathcal{N}_k$.
    \item $\sfunction{Select}$ -- select an argument to be requested by the agent. In general, the argument can be deterministic or any (measurable) function of $\eta_{k,1}^{loc}, \dots,\allowbreak \eta_{k,t-1}^{loc}, \xi_{k,1}^{loc}, \dots, \xi_{k,t}^{loc}$ at time $t$. In the implementation used in Section 4 it is sampled from a uniform distribution on $\mathcal{D}$. 
    \item $\sfunction{Broadcast}$ -- send a request to the neighbours.
    \end{itemize}
\end{remark}
}

\CR{Please notice that a link to the MATLAB code implementation used in Section 4 is provided in the paper.}

\begin{algorithm}[H]
\caption{Non-parametric diffusion-based \textbf{model learning} \Comment{Agent $k$}} \label{Alg:1}

\begin{algorithmic}[1]
\State \textbf{input:} $\{T(\xi_{l,t}^{acq})\colon l\in\mathcal{N}_k\}$, $\{\xi_{l,t}^{req}\colon l\in\mathcal{N}_k\}, \delta, L, \sigma_k $
\State {\textbf{initialization:}} $\mathbb{T}^{loc}_{\KK{k}}=\varnothing, \mathbb{T}^{acq}_{\KK{k}}=\varnothing$
\For{$t=1,2,\dots$}
    \State $\sfunction{Get }(\xi_{k,t}^{loc},y_{k,t}^{loc})$ \Comment{Get local measurement}
    \State $\sfunction{Evaluate }\hat{\mu}_{k,t}^{loc}=\hat{\mu}_{k,t}(\xi_{k,t}^{loc}),\beta_{k,t}^{loc}=\beta_{k,t}(\xi_{k,t}^{loc})$ as in Eqs.~\eqref{eq:NW1}--\eqref{eq:main_bound} 
    \KK{\State $\sfunction{Append }$$T(\xi_{k,t}^{loc}):=(\xi_{k,t}^{loc},\hat{\mu}_{k,t}^{loc},\beta_{k,t}^{loc})$ to $\mathbb{T}^{loc}_{\KK{k}}$ }
    \ForAll {$l\in\mathcal{N}_k$} \Comment{Acquire shared tuples}
        \If  {$\sfunction{IsAvailable}($\KK{$T(\xi_{l,t-1}^{acq})$}$)$,}
           \KK{ \State $\sfunction{Append }$ $T(\xi_{l,t-1}^{acq})$
            to $\mathbb{T}^{acq}_{\KK{k}}$
            \EndIf}

        \State \textbf{end}    
        \If  {$\sfunction{IsAvailable}($\KK{$\xi_{l,t-1}^{req}$}$)$,} \Comment{Send requested tuples}
            \State $\xi_{l,t}^{shr}\gets\argmin_{\xi}\vert\xi-\xi_{l,t-1}^{req}\vert$  s.t.~$T(\xi)\in\{\mathbb{T}^{loc}_{\KK{k}}\cup\mathbb{T}^{acq}_{\KK{k}}\}$
            \State $\sfunction{Send }T(\xi_{l,t}^{shr})$ to node $l$
            \EndIf
        \State \textbf{end}
    \EndFor
    \State \textbf{end}
\State $\sfunction{Select }\xi_{k,t}^{req}$
\State $\sfunction{Broadcast }\xi_{k,t}^{req}$\Comment{Broadcast request to all agents $l\in\mathcal{N}_k$}
\EndFor
\State \textbf{end}
\end{algorithmic} 
\end{algorithm}

\begin{remark}
To apply estimator \eqref{eq:NW1}, one has to pick the bandwidth $h>0$ corresponding to the selectivity of the kernel $K$. Note that in the considered approach it is reasonable to pick $h$ so that it minimizes the upper bound in \eqref{eq:main_bound}. For \CR{a fixed} $x\in\mathcal{D}$, an appropriate bandwidth can be found numerically, \eg, using the Golden Section Search algorithm, since the two terms in $\beta_{k,t}(x)$ are monotone with respect to $h$ (in particular, for any set of explanatory data $\{\xi_{k,n}^{loc}\}$, $\kappa_{k,t}(x)$ grows as $h$ increases).
\end{remark}

In the considered problem setup, agents exchange tuples with their direct neighbors according to the network topology $\mathcal{G}$.  The tuple diffusion protocol is a part of the proposed non-parametric learning method described in Algorithm $1$. For a given agent $k$, at time step $t$ the procedure has two inputs: a set of acquired tuples $\{T(\xi_{l,t}^{acq})\colon l\in\mathcal{N}_k\}$, and a set of requested arguments $\{\xi_{l,t}^{req}\colon l\in\mathcal{N}_k\}$.

\begin{remark}
One of the key parts of Algorithm 1 is the tuple request mechanism `$\sfunction{Select }\xi_{k,t}^{req}$'. As shown in Theorem \ref{Th:main}, the accuracy of the final estimate, $\hat{m}_{k,t}$, at any given point $x$, directly depends on the tuples acquired from the agent's neighbors. Hence, the selection of $\xi_{k,t}^{req}$ should depend on those arguments $x\in\mathcal{D}$ that are particularly important for the corresponding agents in the network. 
\end{remark}

The actual inference about the phenomenon under investigation is performed by agent $k$ according to Algorithm $2$. For any requested $x\in\mathcal{D}$, not necessarily from $\mathcal{D}_k$, data sets $\mathbb{T}^{loc}_{\KK{k}},\mathbb{T}^{acq}_{\KK{k}}$ from Algorithm $1$ are used to evaluate both the estimate $\hat{m}_{k,t}(x)$ and its corresponding error bound $\beta_{k,t}(x)$. 
For a given $\delta\in(0,1)$ the algorithm compares the local prediction of agent $k$ with the acquired network information and returns a non-parametric estimate of $m(x)$ having the tightest available error upper bound with confidence level $1-\delta$.

\begin{algorithm}[H]
	\caption{Non-parametric diffusion-based \textbf{model exploiting} \Comment{Agent $k$}} \label{Alg:2}
	\begin{algorithmic}[1]
	\State \textbf{input:} $x\in\mathcal{D}, \delta, L, \sigma_k$ 
    \State \hskip1.5em $\mathbb{T}^{loc}_{\KK{k}}=\{(\xi_i^{loc},\hat{\mu}_i^{loc},\beta_i^{loc}):i=1,2,\dots\}$ \Comment{Local data set}
   	\State \hskip1.5em $\mathbb{T}^{acq}_{\KK{k}}=\{(\xi_i^{acq},\hat{\mu}_i^{acq},\beta_i^{acq}):i=1,2,\dots\}$\Comment{Acquired data set}
    \State \textbf{output:} $\{\hat{m}_{k,t}(x),\beta_{k,t}(x)\}$
    \State $\sfunction{Evaluate }\beta_{k,t}(x)$ as in Eq.~\eqref{eq:main_bound} 
    \State $\mathbb{I}_{\KK{k}}\gets \{i\colon L\vert x-\xi_i^{acq}\vert<\beta_{k,t}(x)-\beta_i^{acq}\}$
    \If {$\mathbb{I}_{\KK{k}}=\varnothing$},
    		\State $\sfunction{Evaluate }\hat{\mu}_{k,t}(x)$ as in Eq.~\eqref{eq:NW1} 
           \State $\hat{m}_{k,t}(x) \gets \hat{\mu}_{k,t}(x)$
    \Else
           \State $j\gets\argmin_{i\in\mathbb{I}_{\KK{k}}}\{L\vert x-\xi_i^{acq}\vert +\beta_i^{acq}\}$
           \State $\hat{m}_{k,t}(x) \gets \hat{\mu}_{j}^{acq}(x)$
           \State $\beta_{k,t}(x) \gets \beta_{j}^{acq}(x)$
    \EndIf
    \State \textbf{end}
	\end{algorithmic} 
\end{algorithm}

\KK{In line 6 of the model exploiting algorithm, Algorithm~2, a set $\mathbb{I}_k$ of the acquired tuple indices is calculated based on the Lipschitz continuity of $m$ and the idea presented in Theorem 1, that (for a given $x$) provide confidence bounds tighter than for the local estimates. Also, in line 11, if $\mathbb{I}_k$ is nonempty, the tuple with the tightest available bound is found by evaluating $\argmin_{i\in\mathbb{I}_{k}}\{L\vert x-\xi_i^{acq}\vert +\beta_i^{acq}\}$.}

Some non-asymptotic properties of the \CR{proposed} algorithm are established in the following theorem.

\begin{theorem}\label{Th:main}
Let Assumptions 1--4 be in force. Consider any agent $k\in\mathcal{M}$ in the network $\mathcal{G}$ using the tuple exchange protocol of Algorithm 1 and estimating \PW{$m$} as in Algorithm 2. Then, for any $0<\delta<1$, any $t\in\mathbb{N}$ \PW{and every fixed $x\in{\mathcal{D}}$} \CR{(that can be deterministic or in general a function of $\eta_{k,1}^{loc}, \dots, \eta_{k,t-1}^{loc}, \xi_{k,1}^{loc}, \dots, \xi_{k,t}^{loc}$)}, with probability at least $1-\delta$ \CR{(conditioned on $\eta_{k,1}^{loc}, \dots, \eta_{k,t-1}^{loc}, \xi_{k,1}^{loc}, \dots, \xi_{k,t}^{loc}$)},
\begin{align}\label{Eq:th1a}
    \lvert\hat{m}_{k,t}(x)-m(x)\rvert\leq&\min\{\beta_{k,t}(x),B^{acq}(x)\},
\end{align}
where $B^{acq}(x):=\min_i\{B_i^{acq}(x)\}$ and $B_i^{acq}(x)=L\lvert x-\xi_{i}^{acq}\rvert+\beta_{i}^{acq}$.
\end{theorem}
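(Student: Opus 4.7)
The plan is to combine the pointwise confidence bound of Lemma 1 (applied both locally and to each acquired tuple) with the Lipschitz hypothesis on $m$, and then read off the two branches of Algorithm 2 as deterministic consequences once the individual pointwise bounds are assumed to hold.

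First, Lemma 1 applied at $x$ to agent $k$'s local estimator gives $|\hat{\mu}_{k,t}(x)-m(x)|\leq \beta_{k,t}(x)$. For each acquired tuple $(\xi_i^{acq},\hat{\mu}_i^{acq},\beta_i^{acq})$, generated at an earlier time by some neighbor and already certified by the same lemma at the stored point $\xi_i^{acq}$, I have $|\hat{\mu}_i^{acq}-m(\xi_i^{acq})|\leq \beta_i^{acq}$. Adding and subtracting $m(\xi_i^{acq})$ and invoking Assumption 1 lifts this to
\[
|\hat{\mu}_i^{acq}-m(x)|\leq \beta_i^{acq}+L|x-\xi_i^{acq}|=B_i^{acq}(x),
\]
which is exactly the quantity the algorithm uses to rank acquired tuples.

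Next I would unwind the two branches of Algorithm 2. If $\mathbb{I}_k=\varnothing$, every tuple satisfies $L|x-\xi_i^{acq}|\geq \beta_{k,t}(x)-\beta_i^{acq}$, i.e.\ $B_i^{acq}(x)\geq\beta_{k,t}(x)$, so $B^{acq}(x)\geq\beta_{k,t}(x)$ and $\min\{\beta_{k,t}(x),B^{acq}(x)\}=\beta_{k,t}(x)$; since the algorithm returns $\hat{\mu}_{k,t}(x)$, the local bound closes the case. If $\mathbb{I}_k\neq\varnothing$, the chosen index $j=\argmin_{i\in\mathbb{I}_k}\{L|x-\xi_i^{acq}|+\beta_i^{acq}\}$ satisfies $B_j^{acq}(x)<\beta_{k,t}(x)$ by definition of $\mathbb{I}_k$, while every $i\notin\mathbb{I}_k$ gives $B_i^{acq}(x)\geq \beta_{k,t}(x)>B_j^{acq}(x)$. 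Therefore the minimum over $\mathbb{I}_k$ is the minimum over all indices, so $B_j^{acq}(x)=B^{acq}(x)=\min\{\beta_{k,t}(x),B^{acq}(x)\}$, and since the algorithm outputs $\hat{\mu}_j^{acq}$, the transferred bound $|\hat{\mu}_j^{acq}-m(x)|\leq B_j^{acq}(x)$ finishes this branch.

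The main obstacle I anticipate is the probabilistic bookkeeping: Lemma 1 is invoked once for the local estimator and (potentially) once per acquired tuple, whereas the theorem promises confidence $1-\delta$ with no union-bound degradation. I would handle this via the conditioning spelled out in the statement, namely the $\sigma$-field generated by $\eta_{k,1}^{loc},\ldots,\eta_{k,t-1}^{loc},\xi_{k,1}^{loc},\ldots,\xi_{k,t}^{loc}$. Under this conditioning, $x$, the index set $\mathbb{I}_k$, and the selected estimator (either $\hat{\mu}_{k,t}(x)$ or a specific $\hat{\mu}_j^{acq}$) are measurable with respect to already-fixed quantities, so only the confidence bound attached to the \emph{single} estimator actually returned by Algorithm 2 has to be validated; this is one invocation of Lemma 1 and delivers the claimed $1-\delta$ confidence without inflation.
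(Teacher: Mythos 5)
Your proof is correct and follows essentially the same route as the paper's: case-split on the two branches of Algorithm 2, apply Lemma~\ref{L:local} to whichever estimator is actually returned, and lift the acquired tuple's certified bound from $\xi_j^{acq}$ to $x$ via the triangle inequality and Assumption 1. Your explicit remark that the branch selection is measurable with respect to the conditioning data (so only the single returned estimator's bound must be validated and no union-bound inflation occurs) is a point the paper's proof leaves implicit, and it is a welcome clarification rather than a departure.
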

\begin{proof}

According to Algorithm $2$, in case $\beta_{k,t}(x)\leq B^{acq}(x)$, $\mathbb{I}=\varnothing$ and estimator $\hat{m}_{k,t}(x)$ equals $\hat{\mu}_{k,t}(x)$. Hence, inequality \eqref{Eq:th1a} holds at least with probability $1-\delta$ due to Lemma \ref{L:local}. In turn, if $\beta_{k,t}(x)> B^{acq}(x)$, then $\hat{m}_{k,t}(x)=\hat{\mu}_{j}^{acq}$, where $j=\argmin_{i\in\mathbb{I}}\{L\vert x-\xi_i^{acq}\vert +\beta_i^{acq}\}$; this corresponds to the acquired tuple $(\xi_j^{acq},\hat{\mu}_j^{acq},\beta_j^{acq})$, for which $\lvert\hat{\mu}_j^{acq}(\xi_j^{acq})-m(\xi_j^{acq})\rvert\leq \beta_j^{acq}$ with probability $1-\delta$. Therefore, due to Assumption $1$,
\begin{align}
\lvert\hat{\mu}_j^{acq}-m(x)\rvert&=\lvert\hat{\mu}_j^{acq}(\xi_j^{acq})-m(\xi_j^{acq})+m(\xi_j^{acq})-m(x)\rvert \\
&\leq\lvert\hat{\mu}_j^{acq}(\xi_j^{acq})-m(\xi_j^{acq})\rvert+\lvert m(\xi_j^{acq})-m(x)\rvert \\
&\leq\beta_j^{acq}+L\lvert \xi_j^{acq}-x\rvert=B_{j}^{acq}(x)=B^{acq}(x),
\end{align}
with probability at least $1-\delta$, which completes the proof.
\end{proof}

\begin{figure}[ht]
    \centering
    \includegraphics[width=0.6\textwidth]{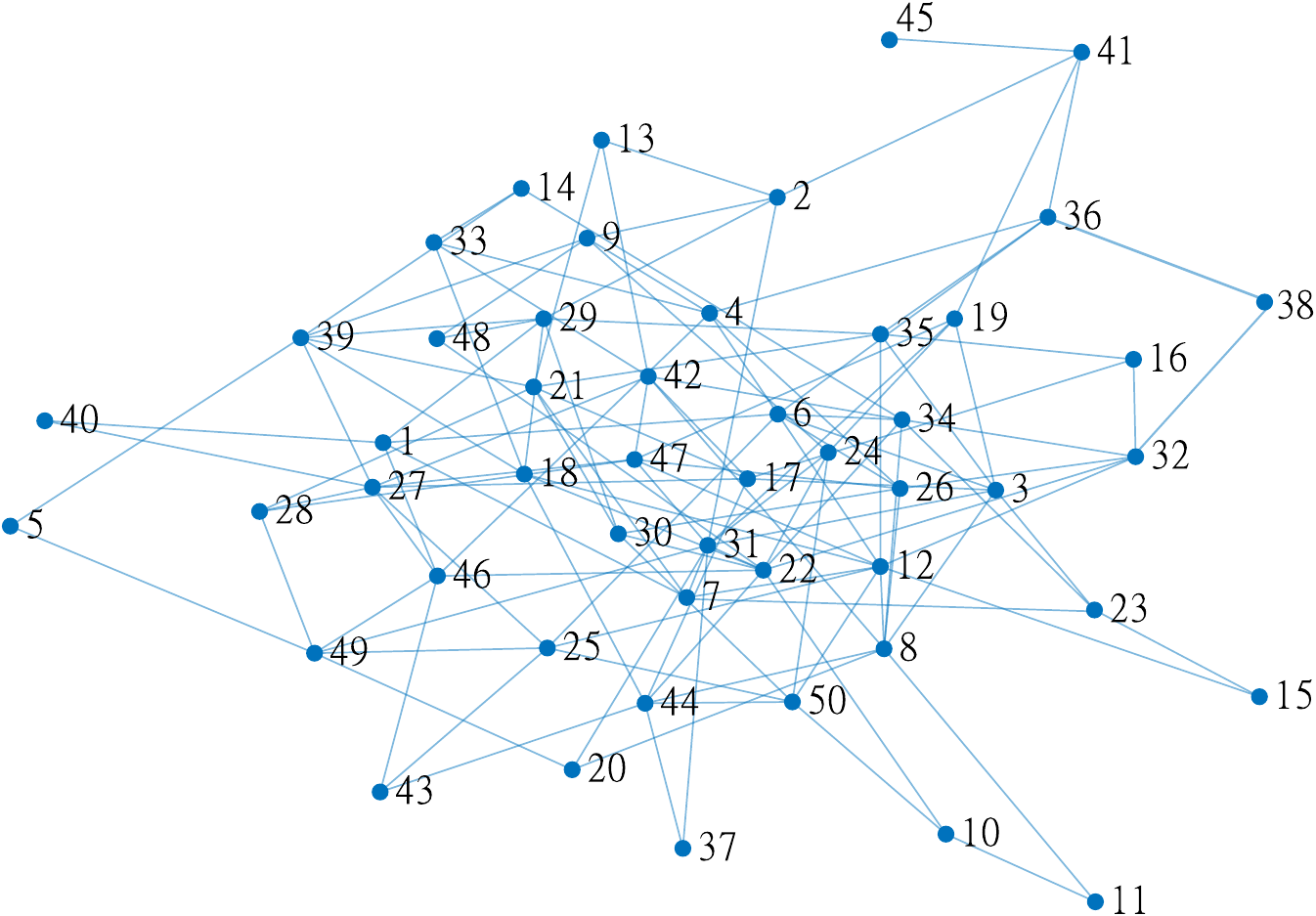}
    \caption{Random topology network with 50 nodes.}
    \label{fig:Net1}
\end{figure}

\section{Numerical experiments}\label{Sec:Num}
In this section, both \textit{model learning} and \textit{model exploiting} algorithms are numerically investigated for the random topology network with 50 nodes shown in Fig.~\ref{fig:Net1}, and the latent non-linearity $m(x)=\sin(x)\exp(-0.2x)+3$. In the experiments\footnote{The Matlab code to obtain the numerical results is available at \url{https://github.com/githuban10/Decentralized-diffusion-based-learning}.}, it is assumed that all agents in the network have local explanatory measurements, $\xi_{k,t}^{loc}$, sampled independently from a normal distribution $\mathcal{N}(\mu_k^{inp},\sigma_k^{inp})$ with randomly selected means $\mu_k^{inp}$ and dispersions $\sigma_k^{inp}$. Similarly, the output noise sequences, $\eta_{k,t}^{loc}$, are generated from independent normal distributions with zero mean and random dispersions, fixed for each agent separately, and sampled uniformly from the interval $(0, 0.7)$. We assume that the known Lipschitz constant $L$ is equal to $1$, the total region of interest, $\mathcal{D}$, is the interval $[0,10]$, and $\delta=0.01$, \ie~all the resulting error bounds should hold with probability~$0.99$ \CR{for each fixed $x \in \mathcal{D}$}.

An example of a histogram of local data collected at node 18 over the time horizon $t=1000$ is shown in Fig.~\ref{fig:node18} (bottom). The same figure contains a histogram of the tuples acquired by the node as a result of the tuple diffusion process. In the experiment, we assumed that all agents in the network generated their tuple requests $\xi_{k,t}^{req}$ purely randomly from a uniform distribution over the entire region of interest $\mathcal{D}=[0,10]$.

\begin{figure}[H]
    \centering
    \includegraphics[width=0.6\textwidth]{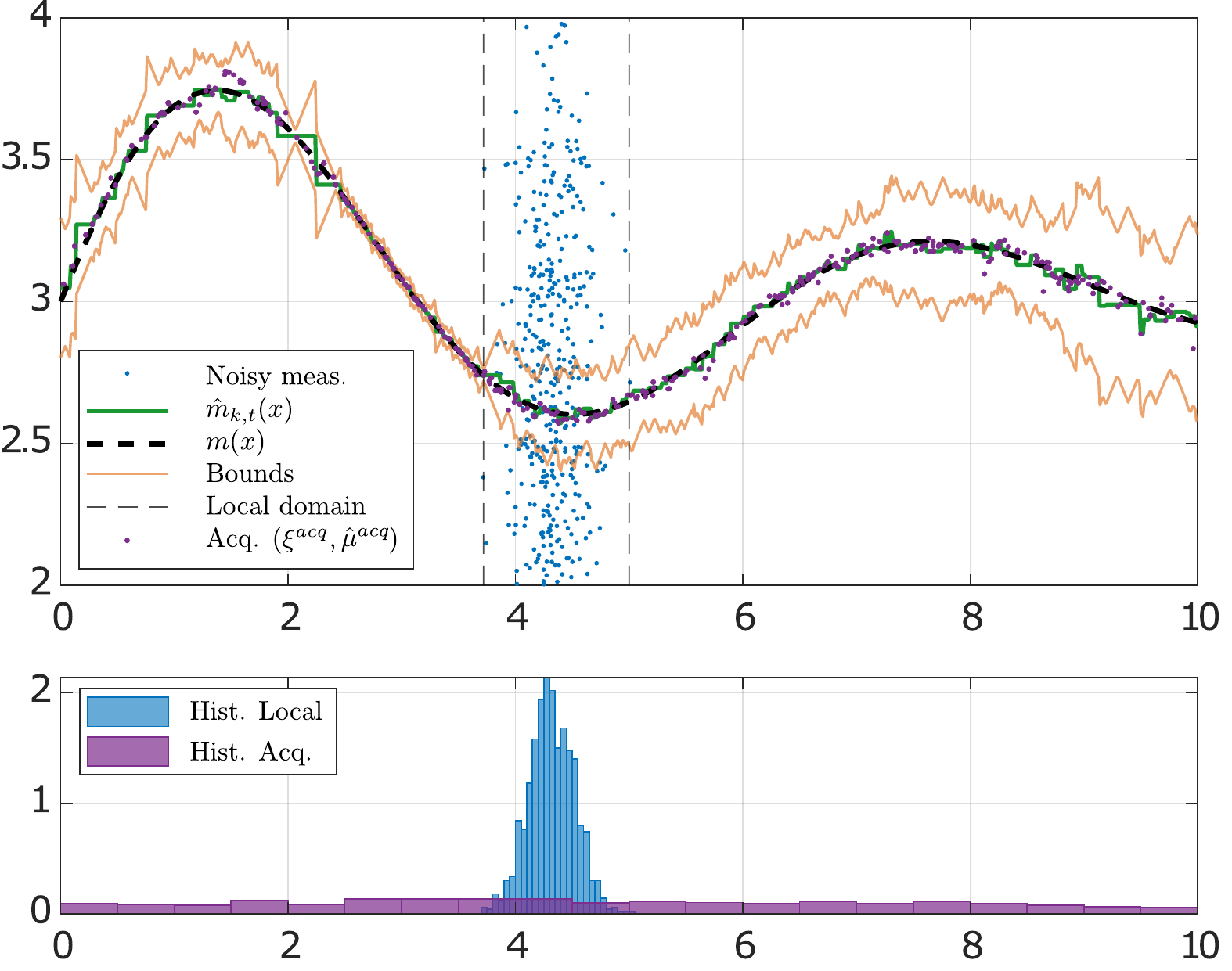}
    \caption{Result of learning for node $k=18$ at time $t=1000$. \textit{Top:} the error bounds (orange) contain the latent phenomenon $m$ (black), although the local measurements of the agent (blue) are far beyond the resulting bounds. \textit{Bottom:} histograms of local measurements (blue) and tuples acquired from the neighbors $\mathcal{N}_k$ (purple).}
    \label{fig:node18}
\end{figure}

To investigate the gain in accuracy due to diffusion-based model learning (Algorithm 1), we compare the outcome of the resulting model, $\hat{m}_{k,t}$, with the inference outcome based on the \textit{local} data only. Due to information diffusion, a single agent is able to properly infer the investigated phenomenon in a domain (much) wider than the support of its own local measurements. This is clearly visible in Fig.~\ref{fig:locest} where we compare the resulting (diffusion-based) model, $\hat{m}_{k,t}$, with the local-data estimation outcome.
\begin{figure}[H]
    \centering
    \includegraphics[width=0.6\textwidth]{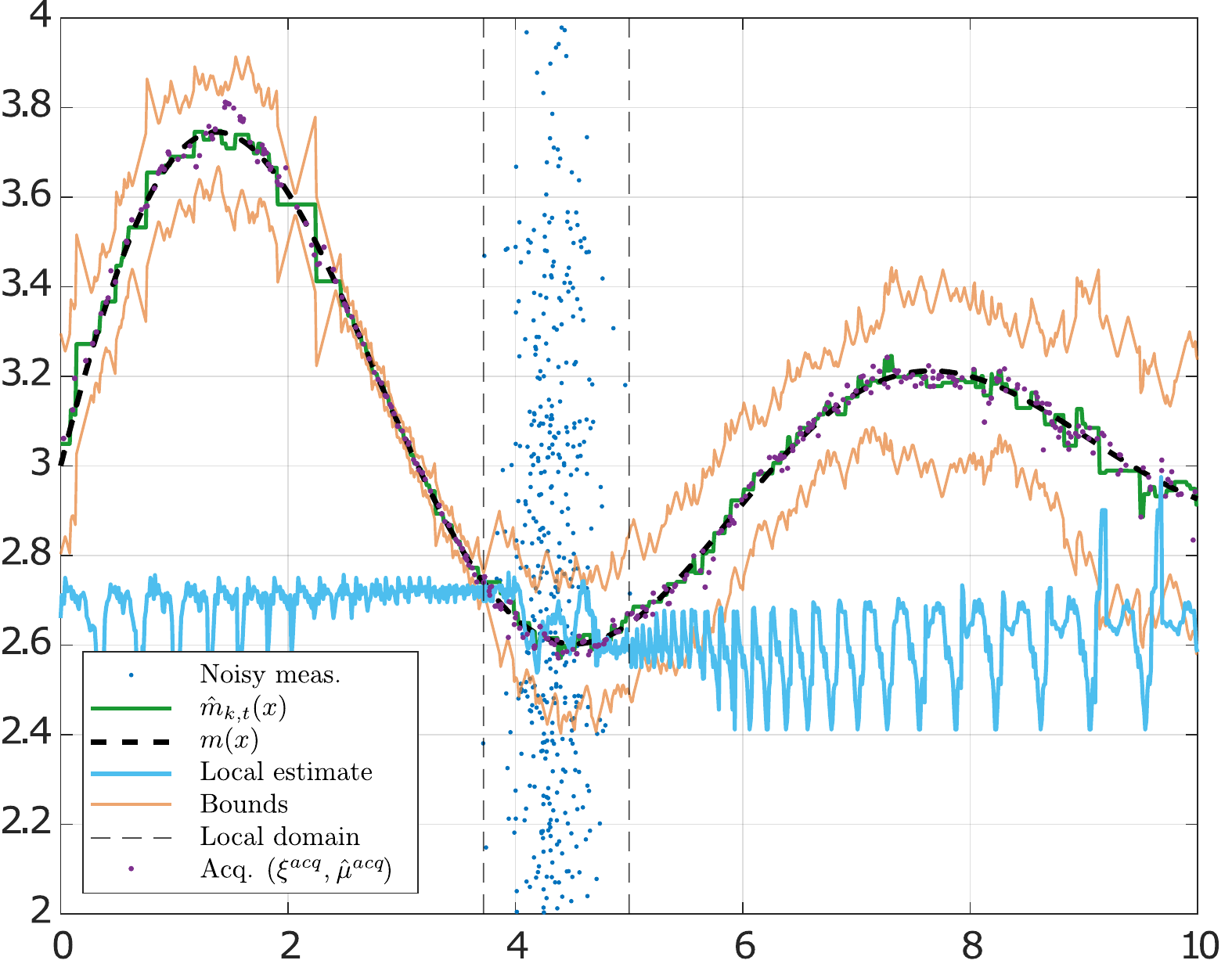}
     \caption{Result of learning for node $k=18$ at time $t=1000$ compared to the estimation based on local, heavily perturbed measurements only (light blue).}
    \label{fig:locest}
\end{figure}

Another interesting property of the proposed method is related to the noise cancellation performed by the agents.  Consider a single agent placed in a highly noisy region of the environment, or equipped with a poor-quality sensor (low signal-to-noise ratio).  Clearly, in that case,  its local measurements are of low quality, which results in low accuracy of the local inference (see Fig.~\ref{fig:betterb}) and non-informative local error bounds. Nevertheless, the acquired neighboring tuples allow for much more accurate estimation results under unchanged confidence level $\delta$. 
\begin{figure}[H]
    \centering
    \includegraphics[width=0.6\textwidth]{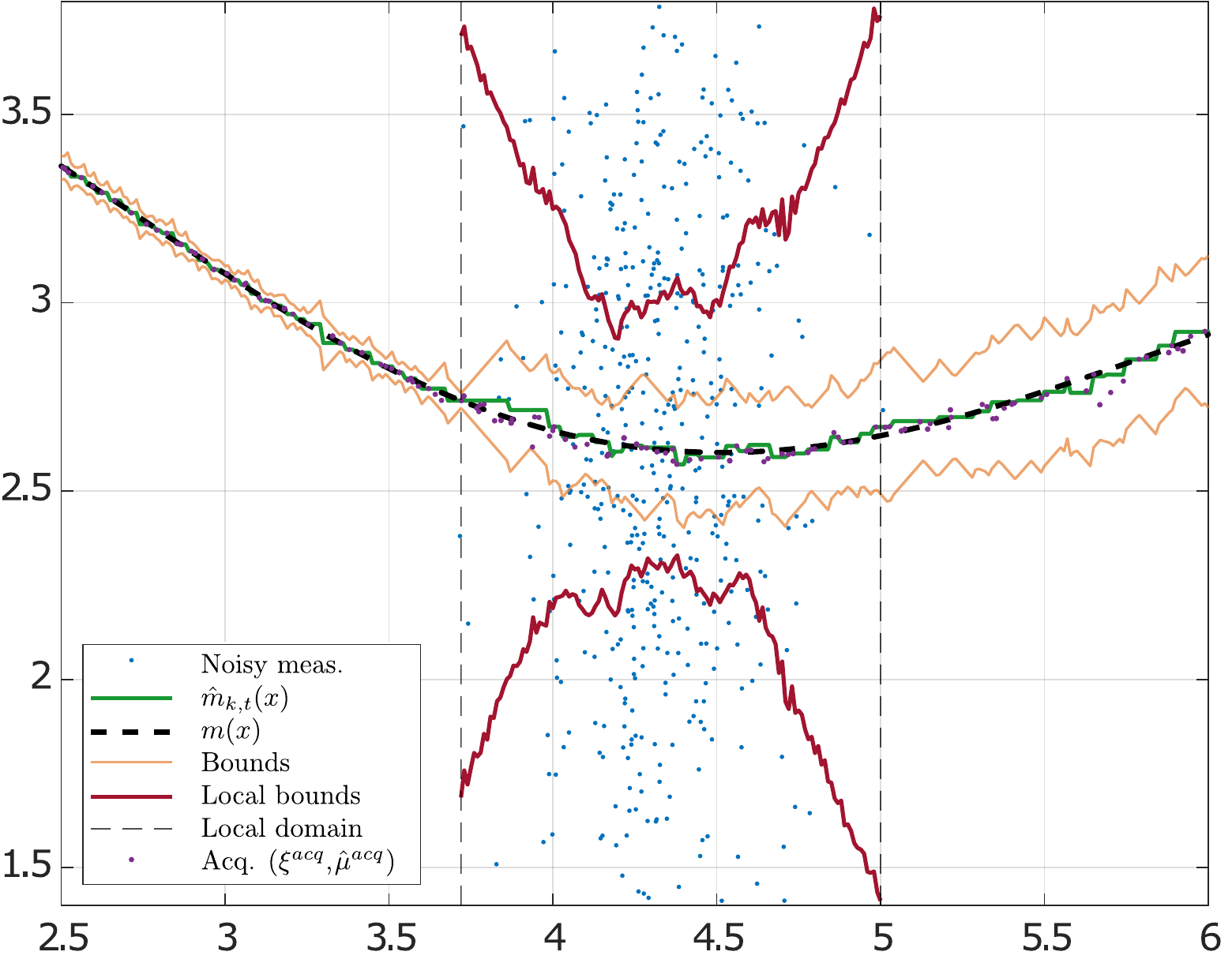}
     \caption{Result of learning for node $k=18$ and $t=1000$ compared to the local agent's bounds (red). }
    \label{fig:betterb}
\end{figure}

In the final experiment, we investigated the evolution in time of the error upper bounds of estimate $\hat{m}_{k,t}$ (for an increasing number of local measurements and acquired tuples). The convergence of the bounds for agents $k=18$ (six connections) and $k=45$ (one connection) is shown in Fig. \ref{fig:bevo} for three different confidence levels $\delta\in\{0.01;0.001;0.0001\}$. Observe that the influence of $\delta$ on the error bounds is relatively minor, which allows for using the algorithm with low $\delta$ values, \ie~with high confidence of the resulting error bounds.
\begin{figure}[H]
    \centering
    \includegraphics[width=0.6\textwidth]{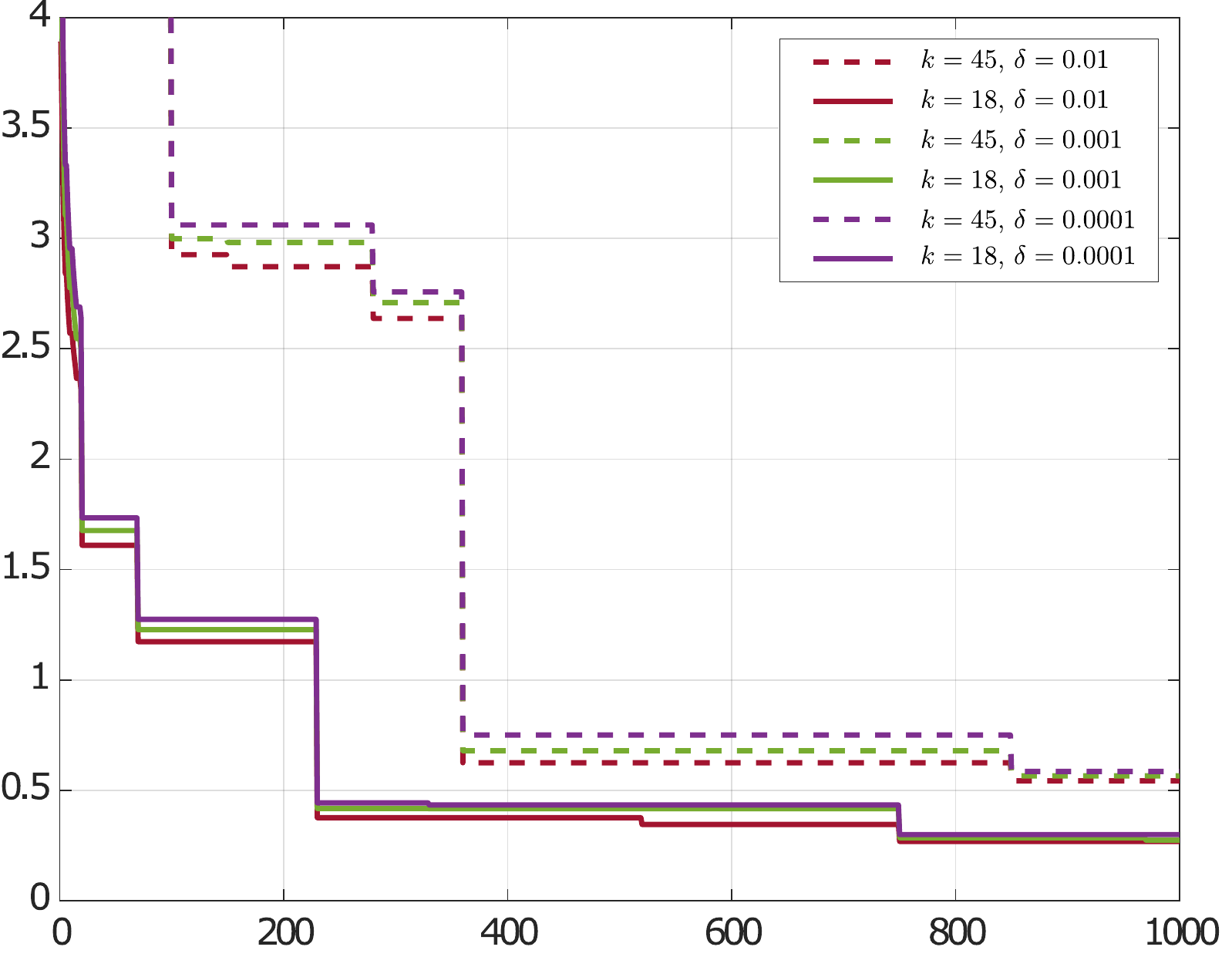}
     \caption{Error upper bounds evolution for nodes $k=18,$ $k=45$ vs. time $t$ for $\delta\in\{10^{-2},10^{-3},10^{-4}\}$.}
    \label{fig:bevo}
\end{figure}
\section{Conclusions}\label{Sec:Con}
In this paper, we have proposed a new non-parametric diffusion-based decentralized learning of Lipschitz-continuous nonlinear phenomena. The proposed algorithm, composed of two cooperating procedures: \textit{model learning} and \textit{model exploiting}, has been formally investigated under mild requirements on the measured signals and additive disturbances. The resulting upper bounds of the estimation error are of non-asymptotic character and hold for user-defined confidence levels. 

We have illustrated the applicability of the method in numerical simulations. According to experimental results, the proposed algorithm ensures collaboration between agents and smooth data diffusion in the network. Due to this, a single agent is able to overcome local obstacles like high noise levels or narrow measurement domain and provide a good-quality model of the investigated phenomenon with known accuracy. Furthermore, the confidence level $\delta$ has a relatively small impact on the derived (non-asymptotic) error bounds, which allows for increasing inference confidence without substantial loss of accuracy.

\section*{Appendix}
\begin{proof}[Proof \textnormal{(}of Lemma \ref{L:local}\textnormal{).}] 
Based on Eq.~\eqref{eq:measurements}, we begin with the observation that, \cf~\cite{dean2021certainty},
\begin{equation}
\left| \sum_{n=1}^{t}\frac{K_{h}( x,\xi _{k,n}^{loc}) }{%
\kappa _{k,t}(x)} y_{k,n}^{loc}-m(x) \right| \leq
\sum_{n=1}^{t}\theta _{n}\vert m( \xi _{k,n}^{loc}) -m(
x) \vert +\left| \sum_{n=1}^{t}\theta _{n}\eta
_{k,n}^{loc}\right|, \label{eq:uppbA}
\end{equation}%
where $\theta _{n}:= K_h( x,\xi _{k,n}^{loc}) /\kappa _{k,t}(x)$. \PW{Note that $\Sigma_{n=1}^t\theta _{n}=1$.}
Due to Assumption $4$, if $K_{h}( x,\xi _{k,n}^{loc}) >0$, then $%
\vert x-\xi _{k,n}^{loc}\vert /h\leq 1$. Therefore, (\cf~Assumption $1$)%
\[
K_{h}( x,\xi _{k,n}^{loc}) >0\quad\Longrightarrow\quad \vert m( \xi_{k,n}^{loc}) -m( x) \vert \leq L\vert x-\xi_{k,n}^{loc}\vert \leq Lh,
\]%
and since the weights $\theta _{n}$ sum up to $1$,%
\[
\sum_{n=1}^{t}\theta _{n}\vert m( \xi _{k,n}^{loc}) -m(
x) \vert \leq Lh.
\]%
For the last term in (\ref{eq:uppbA}), observe that%
\begin{equation}\label{eq:transf}
\left| \sum\nolimits_{n=1}^{t}\theta _{n}\eta _{k,n}^{loc}\right|
=\frac{1}{\kappa _{k,t}(x)} \left| \KK{\sum\nolimits_{n=1}^{t}}K_{h}( x,\xi
_{k,n}^{loc}) \eta _{k,n}^{loc}\right|.
\end{equation}%
According to Lemma \ref{Lem:Tech_A}, the right-hand side of Eq.~\eqref{eq:transf} is upper bounded (with probability $1-\delta$) by
\[
\frac{1}{\kappa_{k,t}(x)} \KK{\sigma_k} \sqrt{2\log \left( \delta ^{-1}\sqrt{1+\sum
\nolimits_{n=1}^{t}K_{h}^{2}( x,\xi _{k,n}^{loc}) }\right) \left(
1+\KK{\sum\nolimits_{n=1}^{t}}K_{h}^{2}( x,\xi _{k,n}^{loc}) \right) }.
\]
Furthermore, since $K_{h}( x,\xi _{k,n}) \leq 1$ (\cf~Assumption 4), we
obtain
\[
\frac{1}{\kappa _{k,t}(x)} \left| \sum\nolimits_{n=1}^{t}K_{h}( x,\xi
_{k,n}^{loc}) \eta _{k,n}^{loc}\right| \leq \KK{\sigma_{k}} \sqrt{2\log
( \delta ^{-1}\sqrt{1+\kappa_{k,t}(x)}) }\frac{\sqrt{1+\kappa_{k,t}(x)%
}}{\kappa _{k,t}(x)}.
\]%
Observe next that, if $\kappa_{k,t}(x) > 1$, then%
\[
\frac{\sqrt{1+\kappa _{k,t}(x)}}{\kappa _{k,t}(x)}<\frac{\sqrt{2\kappa _{k,t}(x)}}{%
\kappa _{k,t}(x)}=\frac{\sqrt{2}}{\sqrt{\kappa _{k,t}(x)}}.
\]%
\KK{Therefore, with probability $1-\delta$, for $\kappa_{k,t}(x) > 1$} 
\begin{equation*}
\frac{1}{\kappa _{k,t}(x)} \left| \sum\nolimits_{n=1}^{t}K_{h}( x,\xi
_{k,n}^{loc}) \eta _{k,n}^{loc}\right| \leq \frac{2 \KK{\sigma_k}}{\kappa _{k,t}(x)} \sqrt{%
\kappa _{k,t}(x) \log \left( \delta ^{-1}\sqrt{1+\kappa _{k,t}(x)}\right)},    
\end{equation*}
whereas for $0<\kappa _{k,t}\leq 1$,%
\begin{align*}
\frac{1}{\kappa _{k,t}(x)} \left| \sum_{n=1}^{t}K_{h}( x,\xi
_{k,n}^{loc}) \eta _{k,n}^{loc}\right| &\leq
\frac{\sigma}{\kappa_{k,t}(x)} \sqrt{2\log\left(\delta ^{-1}\sqrt{1+\kappa_{k,t}(x)}\right) }\sqrt{1+\kappa _{k,t}(x)}\\
&\leq \frac{2\sigma}{\kappa_{k,t}(x)} \sqrt{\log ( \sqrt{2}/\delta)},
\end{align*}
which completes the proof.
\end{proof} 

The following two lemmas are small variations of Theorem $3$ and Lemma $1$ in \cite{abbasi:2011}, respectively.

\begin{lemma}\label{Lem:Tech_A}
Let $\{ v_{t}\colon t\in\mathbb{N} \} $ be a bounded
stochastic process \PW{and $\{ \eta _{t}\colon t\in\mathbb{N}\}$ be a
stochastic process, sub-Gaussian in the sense that there exists a $\sigma >0$ such that for every $\gamma \in \mathbb{R}$, and $t\in\mathbb{N}$,

\begin{equation}\label{subgauss}
\E\{ \exp ( \gamma \eta _{t})|\eta _{1},\dots,\eta_{t-1},v_1,\dots,v_{t} \} \leq \exp \left( \frac{%
\gamma ^{2}\sigma ^{2}}{2}\right).
\end{equation}}
Let also%
\[
S_{t}:=\sum\nolimits_{n=1}^{t}v_{n}\eta _{n}\quad\textnormal{and}\quad V_{t}:=%
\sum\nolimits_{n=1}^{t}v_{n}^{2}.
\]%
Then, for any $t\in\mathbb{N}$ and $0<\delta <1$, with probability $1-\delta$,%
\begin{equation*}
\vert S_{t}\vert \leq \sqrt{2\sigma ^{2}\log \left( \delta ^{-1}%
\sqrt{1+V_{t}} \right) ( 1+V_{t}) }.
\end{equation*}
\end{lemma}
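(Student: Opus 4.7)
The plan is to follow the classical method of mixtures for self-normalized processes, closely mimicking the proof of Theorem~3 in \cite{abbasi:2011}; the only adjustment needed is the choice of filtration to match condition \eqref{subgauss}. First I would fix $\lambda \in \mathbb{R}$ and define the exponential process $M_t^\lambda := \exp(\lambda S_t - \lambda^2 \sigma^2 V_t/2)$, working with the filtration $\mathcal{F}_t := \sigma(\eta_1,\dots,\eta_t,v_1,\dots,v_{t+1})$ so that $v_t$ is $\mathcal{F}_{t-1}$-measurable. Writing $M_t^\lambda = M_{t-1}^\lambda \cdot \exp(\lambda v_t \eta_t - \lambda^2 v_t^2 \sigma^2/2)$ and applying \eqref{subgauss} with $\gamma = \lambda v_t$ (which is permissible because $v_t$ lies in the conditioning $\sigma$-algebra) yields $\E[M_t^\lambda \mid \mathcal{F}_{t-1}] \leq M_{t-1}^\lambda$, so $\{M_t^\lambda\}$ is a non-negative supermartingale with $\E M_t^\lambda \leq 1$.

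Next, I would mix $M_t^\lambda$ over $\lambda$ against the Gaussian density $\phi(\lambda) := (\sigma/\sqrt{2\pi})\exp(-\lambda^2 \sigma^2/2)$ (a centered Gaussian of variance $1/\sigma^2$), setting $\bar{M}_t := \int_\mathbb{R} M_t^\lambda \phi(\lambda)\, d\lambda$. By Tonelli, $\E \bar{M}_t \leq 1$ and the supermartingale property is preserved. Completing the square in $\lambda$ inside the integrand gives the explicit self-normalized closed form
\[
\bar{M}_t = (1+V_t)^{-1/2} \exp\!\left(\frac{S_t^2}{2\sigma^2(1+V_t)}\right),
\]
where boundedness of $\{v_t\}$ ensures $V_t < \infty$ and keeps the mixing integral finite pathwise.

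Finally, Markov's inequality gives $\Pp(\bar{M}_t \geq \delta^{-1}) \leq \delta$. On the complementary event I would take logarithms and rearrange to obtain $S_t^2 < 2\sigma^2 (1+V_t) \log(\delta^{-1}\sqrt{1+V_t})$, which is exactly the claimed tail bound after taking square roots.

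The only real obstacle is the bookkeeping in the first step: one must verify that the conditioning $\sigma$-algebra in \eqref{subgauss} is set up so that $v_t$ is predictable (i.e.\ available at time $t-1$) while the sub-Gaussian bound still applies to $\eta_t$, and that this is consistent with the recursive supermartingale identity for $M_t^\lambda$. Once the filtration is correctly chosen, the Gaussian integral in the mixing step, the Markov inequality, and the inversion to solve for $|S_t|$ are all routine and follow the standard self-normalized template.
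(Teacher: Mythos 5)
Your proposal is correct and follows essentially the same route as the paper: the method of mixtures with a Gaussian mixing density, the bound $\E\{M_t^\lambda\}\leq 1$ (which the paper isolates as its Lemma~3 via the same predictable-filtration argument you sketch), the completed-square closed form $(1+V_t)^{-1/2}\exp(S_t^2/(2\sigma^2(1+V_t)))$, and Markov's inequality. The only cosmetic difference is that you keep $\sigma$ explicit and mix against $\mathcal{N}(0,1/\sigma^2)$, while the paper normalizes $\sigma=1$ and mixes against $\mathcal{N}(0,1)$.
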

\begin{proof}
Without loss of generality, let $\sigma=1$. For any $\lambda\in \mathbb{R}$ let
\[
w_t(\lambda) :=\exp\left( \lambda S_t - \frac{1}{2}\lambda^2 V_{t}\right).
\]
From Lemma~\ref{Lem:Tech_B}, we note for any $\lambda \in \mathbb{R}$ that $\E\{ w_{t}(\lambda)\} \leq 1$. Let now $\Lambda $ be a $\mathcal{N}( 0,1)$ random variable, independent of all other variables. Clearly, $\E\{ w_{t}(\Lambda) |\Lambda \} \leq 1$. Define%
\[
w_{t}:=\E\{ w_{t}( \Lambda ) |v_n,\eta_n \colon n\in\mathbb{N}\}.
\]%
Then $\E\{ w_{t}\} \leq 1$ since $\E\{ w_{t}\} =\E\{\E\{ w_{t}( \Lambda ) |v_n, \eta_n\colon n\in\mathbb{N} \}\} = \E\{ w_{t}( \Lambda ) \} = \E\{ \E\{
w_{t}(\Lambda) |\Lambda \} \} \leq 1$. We can also express $w_t$ directly as
\begin{align*}
w_{t} &=\frac{1}{\sqrt{2\pi }}\int \exp\left( \lambda S_{t}-\frac{1}{2}%
\lambda ^{2}V_{t}\right) \exp\left( \frac{-\lambda ^{2}}{2} \right) d\lambda 
\\
&= \frac{1}{\sqrt{2\pi }}\int \exp\left( -\frac{1}{2}( V_{t}+1)
\lambda ^{2}+S_{t}\lambda\right) d\lambda,
\end{align*}%
which further gives%
\begin{align*}
w_{t} &=\frac{1}{\sqrt{2\pi }}\int \exp\left( -\frac{1}{2}\frac{(
\lambda -\frac{1}{1+V_{t}}S_{t}) ^{2}}{( 1+V_{t}) ^{-1}}\right) \exp\left( \frac{1}{2}\frac{S_{t}^{2}}{1+V_{t}}\right) d\lambda  \\
&=\frac{1}{( 1+V_{t}) ^{1/2}}\exp\left( \frac{1}{2}\frac{%
S_{t}^{2}}{1+V_{t}} \right) \int \frac{1}{( 1+V_{t}) ^{-1/2}\sqrt{%
2\pi }} \exp\left[ -\frac{1}{2}\left( \frac{\lambda -\frac{1}{1+V_{t}}S_{t}}{%
( 1+V_{t}) ^{-1/2}}\right) ^{2}\right] d\lambda  \\
&=\frac{1}{( 1+V_{t}) ^{1/2}} \exp\left( \frac{1}{2}\frac{%
S_{t}^{2}}{1+V_{t}}\right).
\end{align*}
Therefore $\Pp\{ \delta\, w_{t}\geq 1\} $ is equal to%
\begin{align*}
\Pp\left\{ \frac{\delta }{( 1+V_{t}) ^{1/2}}\exp\left( \frac{1}{2}%
\frac{S_{t}^{2}}{1+V_{t}}\right) \geq 1 \right\}
&= \Pp\left\{ \exp\left( \frac{1}{2}\frac{S_{t}^{2}}{1+V_{t}}\right) \geq ( 1+V_{t}) ^{1/2}%
\frac{1}{\delta } \right\} \\
&= \Pp\left\{ \frac{S_{t}^{2}}{1+V_{t}}\geq 2\log \left[ ( 1+V_{t})^{1/2}\frac{1}{\delta }\right] \right\} \\
&= \Pp\left\{ S_{t}^{2}\geq 2\log \left[ \frac{\sqrt{1+V_{t}}}{\delta }\right]
( 1+V_{t}) \right\}. 
\end{align*}
Recall now that $\E\{w_{t}\}\leq 1$. Hence, due to Markov's inequality,%
\[
\Pp\{ \delta\, w_t \geq 1\} \leq \delta\, \E\{ w_{t}\} \leq \delta,
\]%
which completes the proof.
\end{proof}

\begin{lemma}\label{Lem:Tech_B}
Let $\{ v_{t}\colon t\in\mathbb{N} \} $ and $\{ \eta _{t}\colon t\in\mathbb{N} \} $ be as in Lemma \ref{Lem:Tech_A}. For any $\lambda \in \mathbb{R}$, define%
\begin{equation}
w_{t}( \lambda ) :=\exp \left( \sum_{n=1}^{t}\frac{\lambda \eta
_{n}v_{n}}{\sigma }-\frac{1}{2}\lambda ^{2}v_{n}^{2}\right).
\end{equation}%
Then, $\E\{ w_{t}( \lambda ) \} \leq 1$.
\end{lemma}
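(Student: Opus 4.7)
The plan is to show that $\{w_t(\lambda)\}_{t \geq 0}$ is a nonnegative supermartingale with respect to the filtration $\mathcal{F}_t := \sigma(\eta_1,\ldots,\eta_t, v_1,\ldots,v_{t+1})$ appearing in the conditioning of the sub-Gaussian assumption \eqref{subgauss}, with $w_0(\lambda) := 1$ (the empty sum). Once this is established, taking unconditional expectations and iterating yields $\E\{w_t(\lambda)\} \leq \E\{w_{t-1}(\lambda)\} \leq \cdots \leq \E\{w_0(\lambda)\} = 1$, which is exactly the claim.

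The key step is the one-step bound. First I would write the multiplicative decomposition
\begin{equation*}
w_t(\lambda) = w_{t-1}(\lambda) \cdot \exp\!\left(\frac{\lambda \eta_t v_t}{\sigma} - \frac{\lambda^2 v_t^2}{2}\right).
\end{equation*}
Since the conditioning $\sigma$-algebra in \eqref{subgauss} includes $v_1,\ldots,v_t$, the factor $v_t$ is measurable with respect to it, and I can apply the sub-Gaussian bound with the constant $\gamma$ replaced by the random but measurable quantity $\lambda v_t / \sigma$, giving
\begin{equation*}
\E\!\left\{\exp\!\left(\frac{\lambda \eta_t v_t}{\sigma}\right) \,\Big|\, \eta_1,\ldots,\eta_{t-1}, v_1,\ldots,v_t\right\} \leq \exp\!\left(\frac{\lambda^2 v_t^2}{2}\right).
\end{equation*}
Multiplying by the measurable factor $\exp(-\lambda^2 v_t^2 / 2)$ and then by $w_{t-1}(\lambda)$ (both of which can be pulled out of the conditional expectation) yields
\begin{equation*}
\E\{w_t(\lambda) \mid \eta_1,\ldots,\eta_{t-1}, v_1,\ldots,v_t\} \leq w_{t-1}(\lambda),
\end{equation*}
which is the supermartingale property. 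Taking total expectation completes the inductive step.

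The main subtlety, though not technically hard, is bookkeeping about which variables are in the conditioning $\sigma$-algebra at each step: one needs $v_t$ to be measurable so that the sub-Gaussian moment generating function bound can be invoked with a \textit{random} argument $\gamma = \lambda v_t / \sigma$. This is precisely why the paper's sub-Gaussian assumption \eqref{subgauss} is formulated with $v_1,\ldots,v_t$ (rather than only $v_1,\ldots,v_{t-1}$) inside the conditioning. I would make this compatibility explicit in the write-up so that the application of \eqref{subgauss} to a measurable coefficient is unambiguous; beyond that, the argument is the standard Doob-style exponential-supermartingale construction underlying the self-normalized tail bound of Abbasi-Yadkori et al., invoked to prove the preceding Lemma~\ref{Lem:Tech_A}.
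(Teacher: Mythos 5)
Your proposal is correct and follows essentially the same route as the paper: factor $w_t(\lambda)=w_{t-1}(\lambda)\,D_t$, bound $\E\{D_t\mid \eta_1,\dots,\eta_{t-1},v_1,\dots,v_t\}\leq 1$ by applying \eqref{subgauss} with the measurable argument $\gamma=\lambda v_t/\sigma$, and iterate via the tower property. The measurability bookkeeping you flag is exactly the point the paper's conditioning in \eqref{subgauss} is designed to handle.
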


\begin{proof}
Let
\[
D_n := \exp\left(\frac{\lambda \eta _{n}v_{n}}{\sigma }-\frac{1}{2}\lambda
^{2}v_{n}^{2}\right).
\]%
Clearly $w_{t}(\lambda) = D_1 D_ 2 \dots D_t$. Note that $\E\{ D_n | \PW{\eta_1,\dots,\eta_{n-1}, v_1,\dots,v_n}\}$ is equal to
\begin{align*}
&\E\left\{ \left. \exp\left( \frac{\lambda \eta_n v_n}{\sigma}\right) / \exp\left( \frac{1}{2} v_n^2\lambda^{2}\right) \right| \PW{\eta_1,\dots,\eta_{n-1}, v_1,\dots,v_n}\right\} \\
&= \E\left\{ \left. \exp\left(\frac{\lambda \eta_n v_n}{\sigma}\right) \right| \PW{\eta_1,\dots,\eta_{n-1}, v_1,\dots,v_n} \right\} \bigg/ \exp\left( \frac{1}{2} v_n^2\lambda^2\right) 
\end{align*}%
Hence, due to \eqref{subgauss}, 
\[
\E\{ D_n|\PW{\eta_1,\dots,\eta_{n-1}, v_1,\dots,v_n}\}
\leq \exp\left( \frac{( \lambda v_n/\sigma )^2 \sigma^2}{2} \right) \bigg/ \exp\left(\frac{1}{2} v_n^2\lambda^2 \right)
= 1.
\]%
Next, for every $t\in\mathbb{N}$,%
\begin{align*}
\E\{ w_t(\lambda) &|\PW{\eta_1,\dots,\eta_{t-1}, v_1,\dots,v_t}\} = \E\{ D_1 \cdots D_{t-1} D_t | \PW{\eta_1,\dots,\eta_{t-1}, v_1,\dots,v_t}\} \\
&= D_1 \cdots D_{t-1} \E\{ D_t|\PW{\eta_1, ... \eta_{t-1}, v_1, ... v_t}\} \leq w_{t-1}(\lambda) .
\end{align*}%
Therefore \PW{$\E\{ w_{t}( \lambda ) \}= \E\{ \E\{ w_{t}(\lambda ) |\eta_1, ... \eta_{t-1}, v_1, ... v_t\} \}$ can be upper bounded by $\E\{ w_{t-1}(\lambda) \}\leq\dots\leq\E\{w_1(\lambda)\}= \E\{ \E\{ D_1|v_1 \}\}\leq 1$, which completes the proof.}
\end{proof}

\bibliographystyle{plain} 
\bibliography{Literature}
\end{document}